\pgfplotsset{compat=1.18} 
\definecolor{codegray}{rgb}{0.5,0.5,0.5}
\lstdefinestyle{mystyle}{
    %backgroundcolor=\color{backcolour},   
    commentstyle=\color{codegreen},
    keywordstyle=\color{magenta},
    numberstyle=\tiny\color{codegray},
    stringstyle=\color{codepurple},
    basicstyle=\ttfamily\footnotesize,
    breakatwhitespace=false,         
    breaklines=true,                 
    captionpos=b,                    
    keepspaces=true,                 
    numbers=left,                    
    numbersep=5pt,                  
    showspaces=false,                
    showstringspaces=false,
    showtabs=false,                  
    tabsize=2
}
  \lstdefinelanguage{GAP}{
    basicstyle=\ttfamily,
    keywords={true, false, function, return, fail, if, in, while, do, od, else, elif, fi, break, continue},
    keywordstyle=\color{blue}\bfseries,
    otherkeywords={% Operators
      >, <, ==
    },
    breaklines=true,      
    identifierstyle=\color{black},
    sensitive=True,
    comment=[l]{\#},
    commentstyle=\color{cyan},
    stringstyle=\color{red},
    morestring=[b]',
    morestring=[b]"
  }
\providecommand{\U}[1]{\protect\rule{.1in}{.1in}}
\newcolumntype{Y}{>{\raggedleft\arraybackslash}X}
\def\bc{{\mathbb{C}}}
\def\br{{\mathbb{R}}}
\def\br{\mathbb R}
\def\vs{\vskip.3cm}
\def\t2deg{\mathbb T^2\text{\rm -deg}}
\def\s1deg{S^1\text{\rm -deg}}
  \definecolor{mygreen}{rgb}{0,.66,.05}
  \definecolor{lightyellow}{rgb}{1,1,.80}
\newtheorem{proposition}{Proposition}[section]
\newtheorem{remark}{Remark}[section]
\newtheorem{remark-definition}{Remark and Definition}[section]
\newtheorem{rem-not}{Remark and Notation}[section]
\begin{document}

\title[Symmetry Inference in Chaotic Attractors]{A Bayesian Framework for Symmetry Inference in Chaotic Attractors} 

\author{ Ziad Ghanem}\address{Department of Mathematical Sciences, University of Texas at Dallas, Richardson, TX 75080, USA}
\email{Ziad.Ghanem@utdallas.edu}

\author{Chang Hyunwoong}
\address{Department of Mathematical Sciences, University of Texas at Dallas, Richardson, TX 75080, USA}
\email{hwchang@utdallas.edu}

\author{Preskella Mrad}\address{Department of Mathematical Sciences, University of Texas at Dallas, Richardson, TX 75080, USA}
\email{Preskella.Mrad@utdallas.edu}

\date{}

\begin{abstract}
Detecting symmetry from data is a fundamental problem in signal analysis, providing insight into underlying structure and constraints. When data emerge as trajectories of dynamical systems, symmetries encode structural properties of the dynamics that enable model reduction, principled comparison across conditions, and detection of regime changes. While recent optimal-transport methods provide practical tools for data-driven symmetry detection in this setting, they rely on deterministic thresholds and lack uncertainty quantification, limiting robustness to noise and ability to resolve hierarchical symmetry structures. We present a Bayesian framework that formulates symmetry detection as probabilistic model selection over a lattice of candidate subgroups, using a Gibbs posterior constructed from Wasserstein distances between observed data and group-transformed copies. We establish three theoretical guarantees: $(i)$ a Bayesian Occam’s razor favoring minimal symmetry consistent with data, $(ii)$ conjugation equivariance ensuring frame-independence, and $(iii)$ stability bounds under perturbations for robustness to noise. Posterior inference is performed via Metropolis–Hastings sampling and numerical experiments on equivariant dynamical systems and synthetic point clouds demonstrate accurate symmetry recovery under high noise and small sample sizes. An application to human gait dynamics reveals symmetry changes induced by mechanical constraints, demonstrating the framework's utility for statistical inference in biomechanical and dynamical systems.
\end{abstract}
\subjclass[2020]{Primary: 62F15, 37G40; Secondary: 65C05, 49Q22, 37M10}

\keywords{optimal transport, Wasserstein distance, Bayesian inference, equivariant dynamics, symmetry detection, chaotic attractors, uncertainty quantification, MCMC}

\maketitle

\section{Introduction} \label{sec:introduction}
The reliable extraction of coherent signals from noisy observational data is a persistent challenge that motivates the development of novel mathematical and statistical methods. To formalize the notion of coherence in this context, we appeal to the language of algebra: a data set $X$ is said to admit the \textbf{symmetries} of a group $G$ if its structure is invariant under the action of $G$. Successful identification of symmetry from observations enables $(i)$ model reduction and sample‑efficient learning via fundamental domains, $(ii)$ principled comparison across experiments or operating conditions by quotienting out redundant orbits, and $(iii)$ robust detection of regime changes when symmetry is gained, broken, or only partially expressed in noisy measurements. 
\vs
The systematic exploitation of symmetry has a long history in the study of differential equations, culminating in modern equivariant bifurcation theory. This theory employs a range of tools, from the algebraic methods of singularity theory for the local classification of symmetry-breaking phenomena \cite{GolubitskySchaeffer84, GolubitskyStewartSchaeffer, GolubitskyStewart03} to the topological methods of equivariant degree theory for proving the global existence of solutions \cite{survey, book-new, AED, GarciaGhanemKrawcewicz, Ghanem, GhanemCraneLiu}. These advanced techniques all build upon the foundational nineteenth-century work of Sophus Lie, who first used continuous groups to systematically study differential equations \cite{Olver}. For example, the long-term behavior of a dynamical system is represented by sets towards which all trajectories converge, called \textbf{attractors}. The symmetric invariances of an attractor encode structural constraints on the underlying dynamics by organizing bifurcations, linking states into conjugate families, and constraining the forms of possible solutions. Since, in practice, the full attractor is inaccessible, its properties must be inferred from observed trajectories that serve as finite, often noisy proxies. 
Foundational work by Chossat and Golubitsky~\cite{ChossatGolubitsky88} demonstrated how sequences of symmetry-breaking bifurcations generate multiple conjugate chaotic attractors that subsequently merge into larger, more symmetric sets. Melbourne et al. extended this framework by establishing connectedness properties and group-theoretic constraints for symmetric attractors without requiring restrictive dynamical assumptions \cite{Melbourne1993}. These developments sharpen a central question for data analysis: \emph{given only partial and noisy observations, can we reliably infer the symmetry structure of the underlying and unobserved attractor?}
\vs 
The challenge of numerically identifying symmetries from partial attractor data motivated the ``symmetry detectives'' framework introduced by Barany et al.~\cite{Barany93}. The approach involves the construction of group-equivariant observables tailored to a specified parent group and its irreducible representations. Integrating these observables over a thickened attractor yields a diagnostic point whose isotropy reveals the symmetry. Ashwin and Nicol~\cite{AshwinNicol97} reformulated this strategy for functions on an intermediate ``observation space,'' enabling inference from partial measurements such as Poincaré sections.
In a companion study, Ashwin and Tomes~\cite{AshwinTomes97} demonstrated the method experimentally on a physical system of four coupled oscillators with $S_4$ symmetry. Despite its originality, practical deployment of the symmetry-detectives approach requires
\emph{prior knowledge of both the parent group and an equivariant generating map} and depends on expert-designed, group-specific observables.
\vs 
More recently, data‑driven methods have leveraged optimal transport to test for invariances directly from sampled trajectories, overcoming the need for handcrafted observables. Cisternas~\cite{Cisternas2025} represents sampled trajectories as empirical measures and computes Wasserstein distances between the data and its group-transformed copies. A subgroup is accepted if all element-wise distances fall below a user-specified threshold. This framework is elegant and broadly applicable \emph{once a candidate group action is specified} but its reliance on deterministic thresholding is sensitive to noise and nested subgroup hierarchies, and the method provides no uncertainty quantification. Moreover, practical application \emph{still presumes a priori knowledge of a parent group} in order to define the lattice of candidate isotropy groups for testing.
\vs
These limitations motivate a framework that is uncertainty‑aware and robust to observational noise. Unlike approaches that require commitment to a parent symmetry group or an explicit system of differential equations, our framework identifies symmetries solely from the observed data structure. We formulate symmetry identification as Bayesian model comparison over the lattice of subgroup hypotheses, replacing arbitrary thresholds with calibrated probabilistic inference. Our approach centers on a group-orbit Gibbs posterior defined in terms of the average squared Wasserstein distance between observed data and its group-transformed copies. In Proposition \ref{prop:occams_razor}, we prove that this construction naturally implements a Bayesian Occam's razor, automatically favoring the simplest group consistent with the data. We establish two additional theoretical guarantees. First, the cost function is conjugation-invariant, ensuring consistency across physically equivalent datasets (see Proposition \ref{prop:conjugation_invariance}). Second, we derive stability bounds under data perturbations, providing robustness to measurement noise and enabling principled uncertainty quantification (see Proposition \ref{prop:stability}).
Posterior inference is performed via Metropolis–Hastings sampling \cite{Hastings70, Metropolis53}, a cornerstone of modern computational statistics, allowing our algorithm to efficiently explore the discrete subgroup space and return a full posterior distribution. Because the framework does not rely on the existence of an underlying dynamical system, it applies equally well to rotational symmetries in observed trajectories from dynamical systems and synthetic point clouds; Section~\ref{sec:validation} illustrates these capabilities on canonical dihedral examples as well as real‑world biomechanical time series.
\vs
\noindent \textbf{Organization:} The remainder of this paper is organized as follows. Section \ref{sec:methodology} introduces the theoretical foundations and formalizes the symmetry inference problem. Section \ref{sec:bayesian_framework} presents our Bayesian framework, including the Gibbs posterior construction and MCMC sampling strategy. We also provide theoretical guarantees such as conjugation equivariance and stability under perturbations. Section \ref{sec:validation} reports numerical experiments on synthetic and real-world datasets, demonstrating the robustness and interpretability of our approach. For the convenience of the reader, we include in Appendix \ref{sec:transformation_groups} a primer on transformation groups with a particular focus in Appendix \ref{app:dihedral} on the action of the dihedral groups on the complex plane.

\section{Methodology}\label{sec:methodology}

\subsection{Preliminaries: The Structure of Symmetric Attractors}
Consider a discrete dynamical system generated by a continuous map $f: \br^d \to \br^d$. The system is said to be equivariant under the action of a compact Lie group $G$ if the map commutes with all group elements, i.e.
\begin{align*}
    f(g x) = g f(x) \text{ for all } g \in G \text{ and } x \in \br^d.
\end{align*}
The dynamics of the system can be inferred by choosing an initial point $x_0 \in \br^d$ and observing the trajectory formed by the simple recursion rule $x_{n+1} := f(x_n)$. An \emph{attractor} of the system is a set $A \subset \mathbb{R}^d$ toward which trajectories converge for a wide range of initial conditions. The \emph{isotropy} of an attractor $A$ is the largest subgroup $\Sigma(A) \le G$ that leaves $A$ invariant, i.e.
\[
\Sigma(A) := \{ g \in G: gA = A\} \; \text{ where } \; gA := \{ga : a \in A\}.
\]
In practice, the attractor is never observed directly. Instead, we have access only to a finite trajectory $x_{n+1} = f(x_n) +   \epsilon_n$, where the term $\epsilon_n \in \br$ is used to account for possible observational noise, represented as a point cloud $X = \{ x_i \}_{i=1}^N.$
With no knowledge of the generating map $f$ or its equivariance, our goal is to infer the symmetries of the underlying attractor $A$ from this empirical proxy $X$.
 
\subsection{The Wasserstein Distance as a Symmetry Metric}

To quantify invariance, we associate to the point cloud $X$ the empirical measure
\[
\mu_X = \frac{1}{N} \sum_{i=1}^N \delta_{x_i},
\]
where $\delta_{x_i}$ denotes the Dirac measure at $x_i$, such that for any measurable set $Y \subset \mathbb{R}^d$, one has $\mu_X(Y) = \frac{1}{N} | X \cap Y |$.
\vs
Following \cite{Cisternas2025}, we measure the discrepancy between $\mu_X$ and its transformed copy $\mu_{\sigma X}$, where $\sigma X := \{ \sigma x_i \}_{i=1}^N,
$ using the $p$-Wasserstein distance. For two empirical measures with equal cardinality and uniform weights, this distance is given by the solution of the following optimal transport problem
\begin{align}
\label{eq:wasserstein}
    W_p(X, \sigma X) = \Bigg( \min_{P \in \Pi_N} \sum_{i=1}^N \sum_{j=1}^N P_{ij} \| x_i - \sigma x_j \|^p \Bigg)^{1/p},
\end{align}
where $\Pi_N$ is the set of all $N \times N$ transport plan matrices $P: \br^N \rightarrow \br^N$ satisfying the element-wise constraint $\sum_{k=1}^N P_{ik} = \sum_{k=1}^N P_{kj} = \frac{1}{N}$ for all $1 \le i,j \le N$.
If the attractor admits isotropy $\Sigma$, then for every $\sigma \in \Sigma$, the distance $W_p(X,\sigma X)$ will be close to zero, up to sampling error and observational noise. 

\section{A Bayesian Framework for Symmetry Inference}\label{sec:bayesian_framework}
For the sake of notational simplicity, throughout the remainder of this paper, we will denote the distance \eqref{eq:wasserstein} with $p=2$ by
\[
d(X,\sigma X) := W_2(X,\sigma X).
\]
Deterministic approaches, such as \cite{Cisternas2025}, must decide whether $d(X,\sigma X)$ is ``small enough'' by comparing it to an arbitrary threshold. Such methods are sensitive to noise and provide no measure of uncertainty. To overcome these limitations, we recast symmetry detection as a statistical inference problem. 
\subsection{The Cost Function and its Properties}
A central contribution of this work is the formulation of a cost function that reflects the degree of invariance of the observed data $X$ under a candidate symmetry group $\Sigma$. Intuitively, the cost should be low when $X$ is consistent with the hypothesis that the attractor is invariant under all transformations in $\Sigma$, and high otherwise.

To quantify this, we define a cost function $\mathcal{C}(\Sigma,X)$ as the average squared Wasserstein distance over the entire group orbit, i.e.
\begin{equation}\label{eq:cost}
    \mathcal{C}(\Sigma,X) = \frac{1}{|\Sigma|} \sum_{\sigma \in \Sigma} d(X,\sigma X)^2.
\end{equation}
This construction enforces the requirement that invariance must hold for every element of $\Sigma$. Notice that the cost function \eqref{eq:cost} has a built-in complexity penalty. Indeed, if $\Sigma'$ is a strict supergroup of the true symmetry $\Sigma$, then $\Sigma'$ introduces additional transformations for which $d(X,\sigma X)$ is large, thereby increasing the average cost. This mechanism implements a Bayesian form of \emph{Occam's razor} since, among competing hypotheses, the simplest group consistent with the data is favored (see, for example, Chapter~$4$ in \cite{Mackay2003}). This intuitive property can be formalized into a precise criterion that guarantees the preference for a simpler model, provided the data is sufficiently symmetric.
\begin{proposition}[Occam's Razor]\label{prop:occams_razor}
    Let $\Sigma_S, \Sigma_L$ be two candidate symmetry groups for $X$ with $\Sigma_S < \Sigma_L$. Then, one has
\begin{align}\label{eq:occams_razor}
        \mathcal C(\Sigma_S,X) < \mathcal C(\Sigma_L, X),
    \end{align}
if and only if the average cost of the transformations of $X$ contained in $\Sigma_S$ is less than the average cost of the transformations in $\Sigma_L \setminus \Sigma_S$.
\end{proposition}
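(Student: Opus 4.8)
The plan is to reduce the inequality \eqref{eq:occams_razor} to an elementary statement about weighted averages, exploiting the disjoint decomposition $\Sigma_L = \Sigma_S \sqcup (\Sigma_L \setminus \Sigma_S)$. First I would introduce the shorthand
\[
a := \mathcal{C}(\Sigma_S, X) = \frac{1}{|\Sigma_S|}\sum_{\sigma \in \Sigma_S} d(X,\sigma X)^2, \qquad b := \frac{1}{|\Sigma_L \setminus \Sigma_S|}\sum_{\sigma \in \Sigma_L \setminus \Sigma_S} d(X,\sigma X)^2,
\]
for the average cost over the smaller group and over its complement in $\Sigma_L$, respectively; the quantity $b$ is well defined precisely because the inclusion $\Sigma_S < \Sigma_L$ is proper, so $\Sigma_L \setminus \Sigma_S \neq \varnothing$. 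In the language of the statement, $a$ is ``the average cost of the transformations of $X$ contained in $\Sigma_S$'' and $b$ is ``the average cost of the transformations in $\Sigma_L \setminus \Sigma_S$,'' so the proposition is equivalent to the assertion that $\mathcal{C}(\Sigma_S,X) < \mathcal{C}(\Sigma_L,X)$ holds if and only if $a < b$.

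Next I would split the sum defining $\mathcal{C}(\Sigma_L, X)$ along the partition above:
\begin{align*}
\mathcal{C}(\Sigma_L, X) = \frac{1}{|\Sigma_L|}\left( \sum_{\sigma \in \Sigma_S} d(X,\sigma X)^2 + \sum_{\sigma \in \Sigma_L \setminus \Sigma_S} d(X,\sigma X)^2\right) = \frac{|\Sigma_S|}{|\Sigma_L|}\, a + \frac{|\Sigma_L| - |\Sigma_S|}{|\Sigma_L|}\, b.
\end{align*}
Setting $\lambda := |\Sigma_S|/|\Sigma_L|$, this exhibits $\mathcal{C}(\Sigma_L,X) = \lambda a + (1-\lambda) b$ as a convex combination of $a$ and $b$ with $\lambda \in (0,1)$, since $0 < |\Sigma_S| < |\Sigma_L|$ (by Lagrange's theorem $[\Sigma_L:\Sigma_S]$ is in fact an integer $\geq 2$, though only $\lambda \in (0,1)$ is needed). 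Subtracting $a = \mathcal{C}(\Sigma_S, X)$ from both sides gives
\[
\mathcal{C}(\Sigma_L,X) - \mathcal{C}(\Sigma_S,X) = \lambda a + (1-\lambda) b - a = (1-\lambda)(b - a).
\]

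Finally, since $1 - \lambda > 0$, the sign of the left-hand side coincides with the sign of $b - a$, so $\mathcal{C}(\Sigma_S,X) < \mathcal{C}(\Sigma_L,X)$ if and only if $a < b$, which is exactly the claimed criterion. There is no substantive obstacle in this argument; the only points requiring a word of care are that $\Sigma_L = \Sigma_S \sqcup (\Sigma_L \setminus \Sigma_S)$ is genuinely a disjoint union, that the complement is nonempty (so that $b$ and the average it represents are meaningful), and that the convex weight $1-\lambda$ is strictly positive, which is what lets us pass between the strict inequalities without loss.
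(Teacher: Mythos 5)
Your proof is correct and follows essentially the same route as the paper's: both decompose the sum over $\Sigma_L$ into the contributions from $\Sigma_S$ and from $\Sigma_L\setminus\Sigma_S$ and reduce the claim to a comparison of the two averages. Your convex-combination phrasing $\mathcal{C}(\Sigma_L,X)=\lambda a+(1-\lambda)b$ is a slightly tidier packaging of the paper's cross-multiplication step, but the underlying argument is identical.
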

\begin{proof}
    Let $C_S := \Sigma_{\sigma \in \Sigma_S} d(X,\sigma X)^2$ be the sum of squared distances for the group $\Sigma_S$ and put $C_{L\setminus S} := \Sigma_{\sigma \in \Sigma_L \setminus \Sigma_S} d(X,\sigma X)^2$ . With these notations, the inequality \eqref{eq:occams_razor} becomes
    \[
    \frac{C_S}{|\Sigma_S|} < \frac{C_S + C_{L\setminus S}}{|\Sigma_L|}.
    \]
Rearranging terms, one finds the equivalent condition
\[
(|\Sigma_L| - |\Sigma_S|) C_S < |\Sigma_S| C_{L \setminus S}.
\]
Finally, dividing both sides by the positive constants $|\Sigma_L| - |\Sigma_S|$ and $|\Sigma_S|$ (recall, $\Sigma_S$ is assumed to be a proper subgroup of $\Sigma_L$), one obtains the desired tolerance criterion
\[
\frac{C_S}{ |\Sigma_S|} < \frac{C_{L \setminus S}}{|\Sigma_L| - |\Sigma_S|}.
\]
\end{proof}
    \vs
Our cost function enjoys an additional structural property that guarantees consistency across physically equivalent datasets. Specifically, applying a transformation $g \in G$ to the data does not alter the evaluation of symmetry, provided the candidate subgroup is conjugated accordingly. Formally, for any $g \in G$ and any subgroup $\Sigma \le G$, one has
\[
\mathcal{C}(\Sigma, gX) = \mathcal{C}(g \Sigma g^{-1}, X).
\]
This \emph{conjugation equivariance} ensures that the cost depends only on the relative configuration of the data and the subgroup, not on any arbitrary choice of reference frame.  
\begin{proposition}[Conjugation Equivariance of Cost]\label{prop:conjugation_invariance}
    Let $G$ be any group acting orthogonally (isometrically) on $\br^d$. For any candidate subgroup $\Sigma \leq G$, transformation $g \in G$ and data set $X \subset \br^d$, one has
\[
\mathcal C(\Sigma,gX) = \mathcal C(g^{-1} \Sigma g,X).
\]
\end{proposition}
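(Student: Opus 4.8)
The plan is to unwind the definition \eqref{eq:cost} of the cost appearing on the left-hand side, push the transformation $g$ through the Wasserstein distance using the hypothesis that $G$ acts by isometries, and then reindex the resulting sum over the conjugated subgroup $g^{-1}\Sigma g$.

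The one substantive ingredient is the elementary observation that the $2$-Wasserstein distance between equicardinal empirical measures is invariant under a simultaneous isometric transformation of both point clouds. Concretely, for orthogonal $g \in G$ and finite sets $Y, Z \subset \br^d$ of common cardinality $N$, the transport-plan polytope $\Pi_N$ in \eqref{eq:wasserstein} does not depend on $Y,Z$, while every entry of the cost matrix is preserved, $\|g y_i - g z_j\|^2 = \|y_i - z_j\|^2$; hence the linear program \eqref{eq:wasserstein} has the same feasible set and the same objective, so $d(gY, gZ) = d(Y,Z)$. (This is the discrete specialization of the general fact that isometries act trivially on Wasserstein distance.) Applying this with $g^{-1}$ in place of $g$, and using that the induced set maps compose, $(g^{-1}\sigma g)X = g^{-1}\sigma(gX)$, gives for each $\sigma \in \Sigma$
\[
d\bigl(gX,\, \sigma g X\bigr) \;=\; d\bigl(g^{-1}(gX),\, g^{-1}(\sigma g X)\bigr) \;=\; d\bigl(X,\, (g^{-1}\sigma g)X\bigr).
\]

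With this identity the remainder is purely combinatorial. Substituting into \eqref{eq:cost},
\[
\mathcal C(\Sigma, gX) \;=\; \frac{1}{|\Sigma|}\sum_{\sigma \in \Sigma} d(gX, \sigma g X)^2 \;=\; \frac{1}{|\Sigma|}\sum_{\sigma \in \Sigma} d\bigl(X,\, (g^{-1}\sigma g)X\bigr)^2 .
\]
The map $\sigma \mapsto g^{-1}\sigma g$ is the restriction to $\Sigma$ of an inner automorphism of $G$, hence a bijection of $\Sigma$ onto the subgroup $g^{-1}\Sigma g$, and in particular $|g^{-1}\Sigma g| = |\Sigma|$. Reindexing the sum by $\tau = g^{-1}\sigma g$ therefore yields
\[
\mathcal C(\Sigma, gX) \;=\; \frac{1}{|g^{-1}\Sigma g|}\sum_{\tau \in g^{-1}\Sigma g} d(X, \tau X)^2 \;=\; \mathcal C(g^{-1}\Sigma g, X),
\]
which is the assertion.

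I do not expect a genuine obstacle here; the proof is essentially bookkeeping once the isometry-invariance of the optimal-transport value is noted, and that itself reduces to the observation that orthogonality of the $G$-action leaves invariant both the constraint set and the cost of the linear program \eqref{eq:wasserstein}. The only points deserving a line of justification are (i) that conjugation is a group automorphism, so $g^{-1}\Sigma g \le G$ and $\mathcal C(g^{-1}\Sigma g, X)$ is well defined, and (ii) the compatibility $(g^{-1}\sigma g)X = g^{-1}\sigma(gX)$ of the set action with composition in $G$, both of which are immediate.
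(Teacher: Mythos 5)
Your proof is correct and follows essentially the same route as the paper's: both establish the isometry-invariance $d(gY,gZ)=d(Y,Z)$ of the Wasserstein distance, apply it with $g^{-1}$ to get $d(gX,\sigma gX)=d(X,(g^{-1}\sigma g)X)$, and then reindex the orbit-averaged sum over the conjugate subgroup. Your version merely spells out two steps the paper leaves implicit (the linear-programming justification of the isometry invariance and the bijectivity of $\sigma\mapsto g^{-1}\sigma g$), which is fine.
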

\begin{proof}
Let $X,Y \subset \br^d$, $(x,y) \in X \times Y$ and $g \in G$.
Since $\br^d$ is an orthogonal $G$-representation, one always has
    \[
    \| gx - gy \| = \| x - y \| \iff d(gX,gY) = d(X,Y).
    \]
In particular, for any $\sigma \in \Sigma$, it follows that
\begin{align}\label{eq:conjugation_invariance}
    d(gX, \sigma gX) = d(g^{-1}g X,g^{-1}\sigma g X) = d(X, g^{-1}\sigma g X).
\end{align}
Now, consider the cost function for $\Sigma$ evaluated on the transformed data $gX$
\[
\mathcal C(\Sigma,gX) = \frac{1}{|\Sigma|} \sum_{\sigma \in \Sigma} d(gX,\sigma gX)^2.
\]
Substituting the identity \eqref{eq:conjugation_invariance}, one obtains
\[
\mathcal C(\Sigma, gX) = \frac{1}{|\Sigma|} \sum_{\sigma \in \Sigma} d(X,g^{-1}\sigma gX)^2 = \mathcal C(g^{-1}\Sigma g, X).
\]
\end{proof}
\vs
We can also establish an analytical property of the cost function that guarantees its stability under perturbations of the data. This result is essential for practical applications, providing a rigorous basis for the method's robustness to measurement noise and enabling principled uncertainty quantification. Specifically, if two datasets $X$ and $Y$ are close in Wasserstein distance, then their associated costs $\mathcal{C}(\Sigma,X)$ and $\mathcal{C}(\Sigma,Y)$ differ by at most a controlled amount.
\begin{proposition}[Stability to Perturbations]\label{prop:stability}
Let $X,Y$ be datasets with $d:=d(X,Y)$. If there exists $M\ge 0$ satisfying
$d(X,\sigma X)\le M$ for all $\sigma\in\Sigma$, then
\[
\big|\mathcal{C}(\Sigma,X)-\mathcal{C}(\Sigma,Y)\big|\ \le\ 4Md + 4d^2.
\]
\end{proposition}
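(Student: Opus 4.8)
The plan is to reduce the claim to a uniform elementwise estimate on the squared Wasserstein terms in \eqref{eq:cost} and then average. Writing out the difference of costs,
\[
\mathcal{C}(\Sigma,X)-\mathcal{C}(\Sigma,Y)=\frac{1}{|\Sigma|}\sum_{\sigma\in\Sigma}\big(d(X,\sigma X)^2-d(Y,\sigma Y)^2\big),
\]
so by the triangle inequality for $|\cdot|$ it is enough to bound $\big|d(X,\sigma X)^2-d(Y,\sigma Y)^2\big|$ by $4Md+4d^2$ for every fixed $\sigma\in\Sigma$; averaging then preserves the bound.

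First I would control the difference of the unsquared distances. Applying the triangle inequality for $W_2$ twice and using that every $\sigma\in\Sigma\le G$ acts isometrically on $\br^d$ (so $d(\sigma X,\sigma Y)=d(X,Y)=d$, exactly as in the proof of Proposition \ref{prop:conjugation_invariance}), I obtain
\[
d(Y,\sigma Y)\le d(Y,X)+d(X,\sigma X)+d(\sigma X,\sigma Y)=d(X,\sigma X)+2d,
\]
and symmetrically $d(X,\sigma X)\le d(Y,\sigma Y)+2d$; hence $\big|d(X,\sigma X)-d(Y,\sigma Y)\big|\le 2d$.

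Next I would factor the difference of squares as $|a^2-b^2|=|a-b|\,(a+b)$ with $a=d(X,\sigma X)$ and $b=d(Y,\sigma Y)$. The first factor is at most $2d$ by the previous step. For the second factor, the hypothesis gives $a\le M$, and combining this with $b\le a+2d$ yields $a+b\le 2M+2d$. Multiplying, $\big|d(X,\sigma X)^2-d(Y,\sigma Y)^2\big|\le 2d(2M+2d)=4Md+4d^2$, which is independent of $\sigma$, so averaging over $\Sigma$ gives the stated inequality.

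The argument is essentially careful bookkeeping; the one point that needs attention is that the bound $M$ on $d(X,\sigma X)$ must be \emph{propagated} to $d(Y,\sigma Y)$ via the triangle step in order to control the sum $a+b$, rather than being invoked only once. I would also flag that the statement implicitly relies on $G$ acting by isometries (orthogonally), the same standing hypothesis used in Proposition \ref{prop:conjugation_invariance}: without the identity $d(\sigma X,\sigma Y)=d(X,Y)$ the constants would change and the bound as written would not hold.
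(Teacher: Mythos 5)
Your proof is correct and follows essentially the same route as the paper's: bound $|d(X,\sigma X)-d(Y,\sigma Y)|\le 2d$ via the triangle inequality and the isometry identity $d(\sigma X,\sigma Y)=d(X,Y)$, factor the difference of squares, bound $a+b\le 2M+2d$, and average over $\Sigma$. Your remark that the hypothesis $d(X,\sigma X)\le M$ must be propagated to $d(Y,\sigma Y)$, and that the orthogonality of the $G$-action is an implicit standing assumption, matches exactly what the paper's argument uses.
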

\begin{proof}
Fix $\sigma\in\Sigma$ and set $a:=d(X,\sigma X)$, $b:=d(Y,\sigma Y)$. Notice, by the triangle inequality, that one has
\[
a = d(X,\sigma X) \leq d(X,Y) + d(Y,\sigma X) = d + d(Y,\sigma X),
\]
and
\[
d(Y,\sigma X) \leq d(Y,\sigma Y) + d(\sigma Y, \sigma X) = b +  d(\sigma Y, \sigma X).
\]
Since $d(\sigma Y, \sigma X) = d(Y,X) = d$, this upper bound becomes $a \leq b + 2d$. 
Notice that, by symmetry, one also has $b\le a+2d$ and hence $|a-b|\le 2d$. Using $|a^2-b^2|=|a-b|\,(a+b)$ together with $a\le M$ and $b\le M+2d$ yields
\[
|a^2-b^2|\ \le\ (2d)\,(2M+2d) = 4Md + 4d^2.
\]
From here, averaging over $\sigma\in\Sigma$ provides the desired bound.
\end{proof}
\subsection{Posterior Distribution}
In order to move from the deterministic cost function \eqref{eq:cost} to a probabilistic framework we must specify a prior distribution $P(\Sigma)$ and characterize a posterior distribution $ P(\Sigma \mid X)$ over our set of candidate subgroups. Throughout this work, we assume a uniform prior, reflecting an unbiased initial belief by assigning equal a priori probability to all candidate groups. With this choice, the \textbf{Gibbs posterior} distribution is given by
\begin{equation}\label{eq:gibbs_posterior}
    P(\Sigma \mid X) \propto \exp\big(-\lambda \,\mathcal{C}(\Sigma,X)\big),
\end{equation}
where $\lambda > 0$ is a hyperparameter that acts as an inverse temperature. In our setting, larger values of $\lambda$ impose a stronger penalty on non-zero costs, causing the posterior to concentrate sharply on the group that minimizes the cost. This approach allows us to assign a probability to each symmetry hypothesis $\Sigma$, such that lower-cost hypotheses are assigned higher probabilities. 
\subsection{The MCMC Sampler}
For any infinite lattice of candidate subgroups, the posterior distribution \eqref{eq:gibbs_posterior} cannot be directly computed, so we use a Metropolis--Hastings Markov Chain Monte Carlo (MCMC) algorithm to approximate it. Assume in particular that the parameter space is the discrete lattice of candidate subgroups, e.g. the integers $n \geq 2$ for dihedral groups $D_n$ (see Appendix \ref{app:dihedral}).

Starting from an initial group $\Sigma$, the sampler proposes a new candidate $\Sigma'$ by a local move in this lattice (e.g., $n \mapsto n \pm 1$). The proposed move is accepted with probability
\begin{equation}\label{eq:acceptance}
    \alpha(\Sigma' \mid \Sigma) = \min\!\Bigg( 1,\; \frac{P(X \mid \Sigma')}{P(X \mid \Sigma)} \Bigg).
\end{equation}
The resulting Markov chain is a sequence of subgroup samples whose empirical distribution approximates the posterior $P(\Sigma \mid X)$. From these samples, we estimate posterior probabilities, compute maximum a posteriori (MAP) estimates, and quantify uncertainty over the symmetry structure.

\section{Numerical Validation}\label{sec:validation}

To assess the robustness and practical utility of our Bayesian framework, we validate the method on several examples, including a trajectory from an equivariant dynamical systems, a synthetic point cloud and a real-world data set.

\subsection{Example 1: Dihedral Symmetries in The Chossat-Golubitsky System}

In their seminal work \cite{ChossatGolubitsky88}, Chossat and Golubitsky analyzed \emph{symmetry-increasing bifurcations} for a family of $D_n$-equivariant maps on the complex plane ($z \in \mathbb{C}$) of the form
\begin{equation}\label{eq:cg_map}
    f(z) = \big(\alpha |z|^2 + \beta (z^n + \bar{z}^n)/2 + \lambda\big)z + \gamma \bar{z}^{\,n-1}, 
\end{equation}
where `$|z|$' indicates the complex modulus of $z \in \bc$ and `$\overline{z}$' indicates its complex conjugation. For parameters $\alpha = 1$, $\beta = 0$, $\gamma = 0.5$, $\lambda = -1.804$, and $n = 3$, all trajectories of the discrete system $z_{k+1} = f(z_k)$ converge to an attractor with $D_3$ symmetry. This noise-free attractor serves as the ground truth for our first validation experiment (Figure~\ref{fig:cg_data}, left).

To simulate a challenging scenario, we generated a short trajectory of $N = 150$ points and added substantial isotropic Gaussian noise ($\sigma = 0.5$). As shown in Figure~\ref{fig:cg_data} (right), this noise level completely obscures the underlying $D_3$ symmetry to the naked eye, creating a difficult test for our method.
\begin{figure}[htbp]
    \centering
\includegraphics[width=\textwidth]{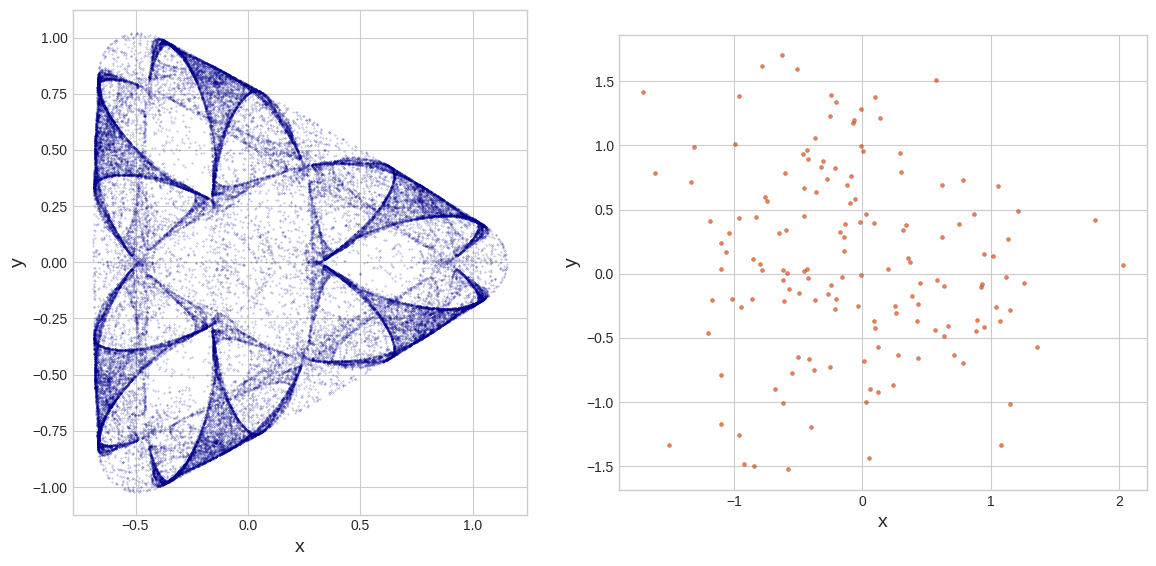}
\caption{\textbf{(Left)} Noise-free attractor for the Chossat--Golubitsky map, generated from a $50,000$-point trajectory to illustrate the ground-truth $D_3$ symmetry. \textbf{(Right)} Noisy ($\sigma = 0.5$) $150$ point trajectory used for analysis, where the symmetry is visually obscured.}
    \label{fig:cg_data}
\end{figure}
%\FloatBarrier
\subsubsection{A Deterministic Benchmark}
As a baseline, we applied the deterministic threshold-based method of Cisternas~\cite{Cisternas2025} to this noisy dataset. The procedure computes the Wasserstein distance for every transformation in a candidate group (e.g., $D_n$) and accepts the group only if \emph{all} element-wise distances fall below a user-defined threshold $\upsilon$.

We tested candidate groups $D_n$ with $2 \leq n \leq 6$. Results in Figure~\ref{fig:cg_benchmark} highlight the fragility of this approach: correct classification occurs only if $\upsilon$ lies within a narrow ``robust window'' between approximately $0.349$ and $0.391$. Below this range, the true symmetry ($D_3$) is rejected (\emph{false negative}); above it, non-symmetric transformations are accepted (\emph{false positive}). 
This extreme sensitivity to a single, user-defined hyperparameter highlights the need for a more robust method.

\begin{figure}[htbp]
    \centering    \includegraphics[width=\textwidth]{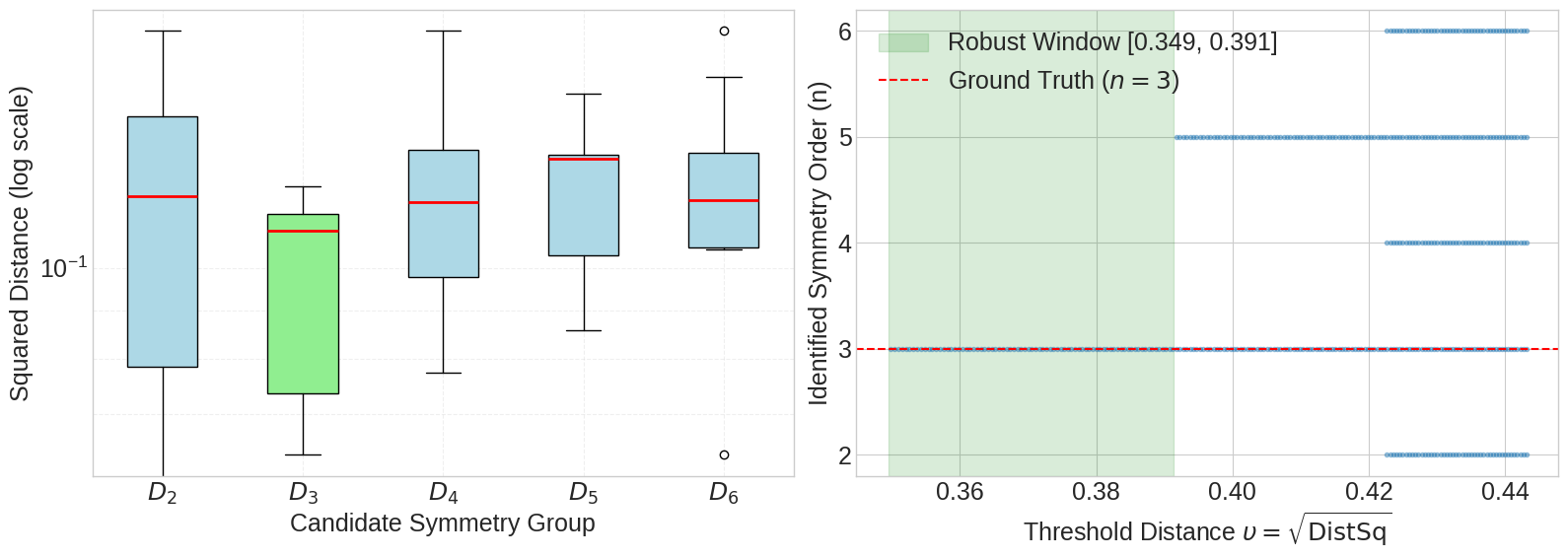}
    \caption{Benchmark analysis of the deterministic method from \cite{Cisternas2025} on the noisy $D_3$ attractor. \textbf{(Left)} Distribution of squared Wasserstein distances for candidate groups; distances for $D_3$ cluster near zero. \textbf{(Right)} Classification outcome as a function of threshold $\upsilon$. Correct identification of $n = 3$ occurs only within the narrow green ``robust window''.}
    \label{fig:cg_benchmark}
\end{figure}
\subsubsection{Bayesian Inference Results}
Applying our Bayesian framework to the same dataset, we ran a Metropolis--Hastings sampler for $20,000$ iterations with sharpness parameter $\lambda = 250$. The resulting posterior distribution $P(n \mid X)$ is shown in Figure~\ref{fig:cg_mcmc}.

The \emph{maximum a posteriori} (MAP) estimate is $n = 3$, correctly recovering the ground-truth symmetry despite severe noise and limited data. The posterior assigns $72.8\%$ probability to this model, reflecting the high uncertainty inherent in the task. Rather than overconfidently committing to a single label, the posterior spreads remaining mass across higher-order symmetries, providing an honest uncertainty quantification. The MCMC trace (Figure~\ref{fig:cg_mcmc}, left) confirms effective exploration (achieving an acceptance rate of $0.205$) of the parameter space before concentrating near $n = 3$.

\begin{figure}[htbp]
    \centering
    \includegraphics[width=\textwidth]{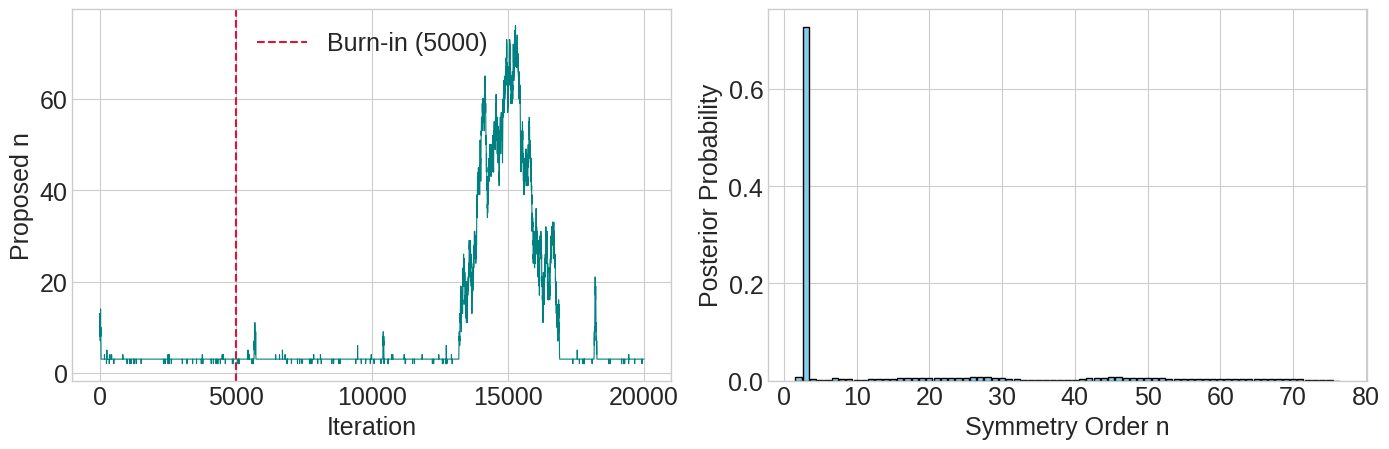}
    \caption{Bayesian inference on the noisy $D_3$ attractor. \textbf{(Left)} MCMC trace for symmetry order $n$. \textbf{(Right)} Posterior distribution $P(n \mid X)$, with MAP at $n = 3$ (posterior probability $72.8\%$).}
    \label{fig:cg_mcmc}
\end{figure}
\FloatBarrier
\subsection{Example 2: Nested Dihedral Symmetries in Synthetic Point Cloud}
To evaluate the framework under more challenging conditions, we constructed a synthetic dataset with high-order dihedral symmetry $D_{12}$, which admits a rich lattice of subgroups ($D_6, D_4, D_3, D_2$). Rather than generating a trajectory from a known map, we built the point cloud directly to ensure perfect underlying symmetry. Specifically, we sampled a ``motif'' of eight random points within a fundamental domain of the $D_{12}$ action and replicated this motif under all $2 \times 12 = 24$ group transformations, producing a $D_{12}$-invariant point cloud of $N = 192$ points. To simulate realistic conditions, we added isotropic Gaussian noise with $\sigma = 0.05$ (Figure~\ref{fig:d12_data}).
\begin{figure}[htbp]
    \centering
\includegraphics[width=\textwidth]{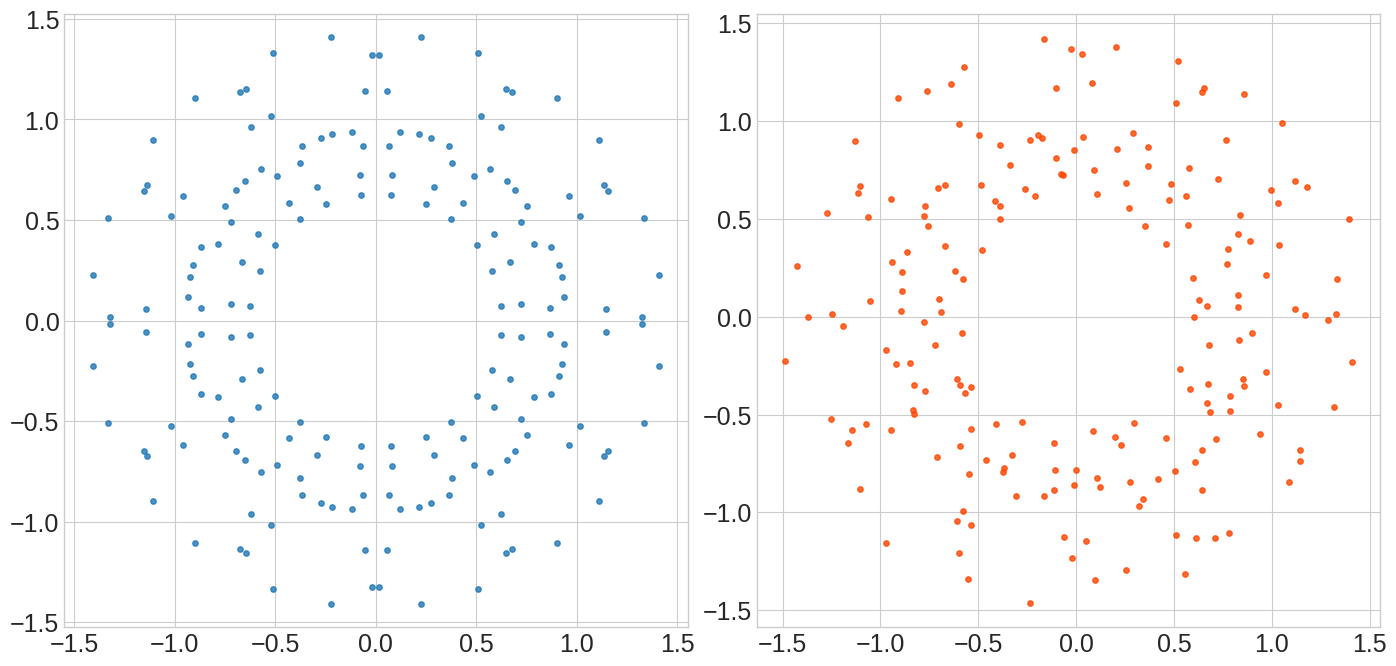}
    \caption{\textbf{(Left)} Noise-free point cloud with perfect $D_{12}$ symmetry. \textbf{(Right)} Noisy dataset ($N = 192$, $\sigma = 0.05$) used for analysis.}
    \label{fig:d12_data}
\end{figure}

This scenario is particularly challenging because any dataset with $D_{12}$ symmetry is also perfectly invariant under the transformations of all its subgroups. This creates a highly multimodal posterior landscape, where $n=2, 3, 4, 6,$ and $12$ are all valid hypotheses.

\subsubsection{A Deterministic Benchmark}
Applying the deterministic threshold-based method from \cite{Cisternas2025} to this dataset further highlights its limitations. Figure~\ref{fig:d12_benchmark} (left) shows that all subgroups of $D_{12}$ exhibit similarly low costs, making them indistinguishable from the ground truth. The threshold sweep (right) confirms the absence of any ``robust window'' isolating $n = 12$: thresholds low enough to accept $D_{12}$ also accept its subgroups, while thresholds high enough to isolate a single group select one of the simpler subgroups instead.

\begin{figure}[htbp]
    \centering
\includegraphics[width=\textwidth]{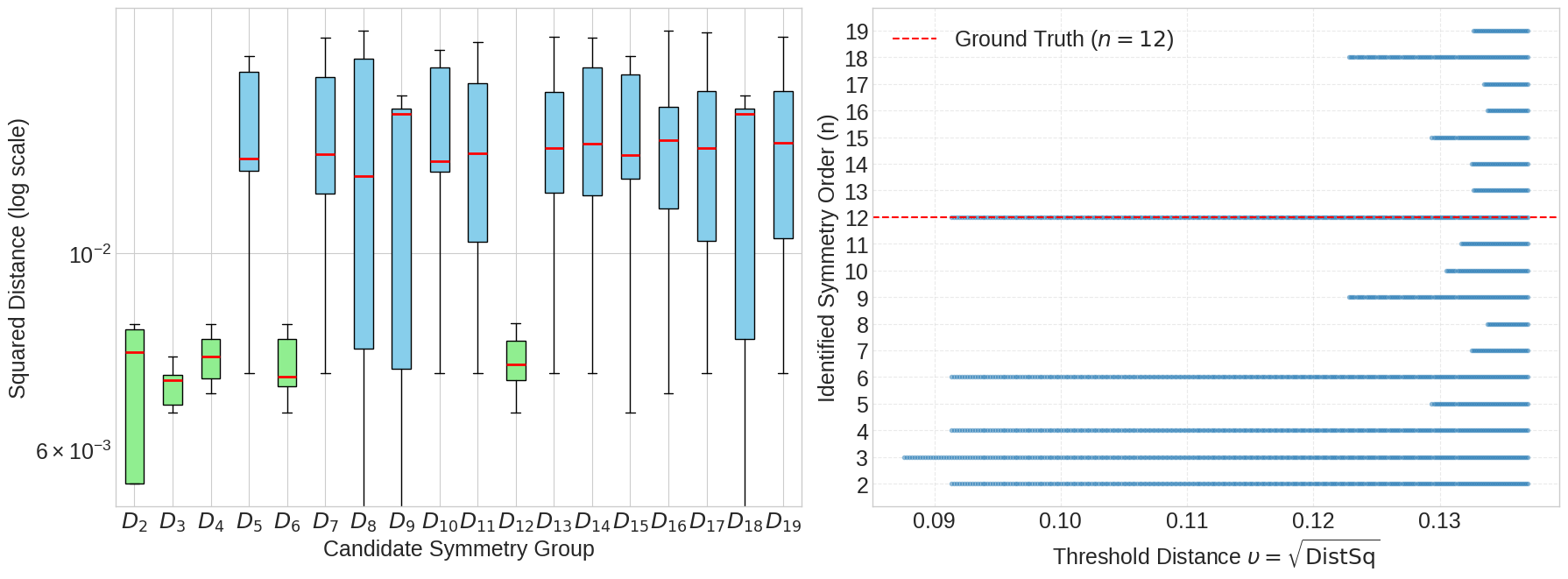}
    \caption{Benchmark analysis from \cite{Cisternas2025} on the noisy $D_{12}$ dataset. \textbf{(Left)} Boxplot of squared Wasserstein distances for candidate groups; all subgroups of $D_{12}$ (highlighted in green) exhibit low costs. \textbf{(Right)} Threshold sweep fails to isolate the true symmetry $n = 12$.}
    \label{fig:d12_benchmark}
\end{figure}

\subsubsection{Bayesian Inference Results}
The multimodal posterior landscape poses a challenge for standard MCMC samplers, which can become trapped in local modes corresponding to subgroups. To address this, we employed a \emph{compound proposal mechanism}: at each step, the sampler performs a local move ($n \mapsto n \pm 1$) with probability $95\%$ or a global ``jump'' to a randomly chosen $n$ with probability $5\%$. This strategy enables the chain to escape local maxima and explore the full posterior.

We ran the sampler for $100,000$ iterations with $\lambda = 250$. Results are shown in Figure~\ref{fig:d12_mcmc}. The trace plot indicates excellent mixing (acceptance rate $0.728$), and the posterior distribution reflects the inherent ambiguity of the problem: rather than collapsing to a single peak, it assigns substantial probability mass to the true symmetry $D_{12}$ and its valid subgroups ($D_6, D_4, D_3, D_2$). This illustrates the framework's ability to capture structural uncertainty in cases where deterministic methods fail.

\begin{figure}[htbp]
    \centering
    \includegraphics[width=\textwidth]{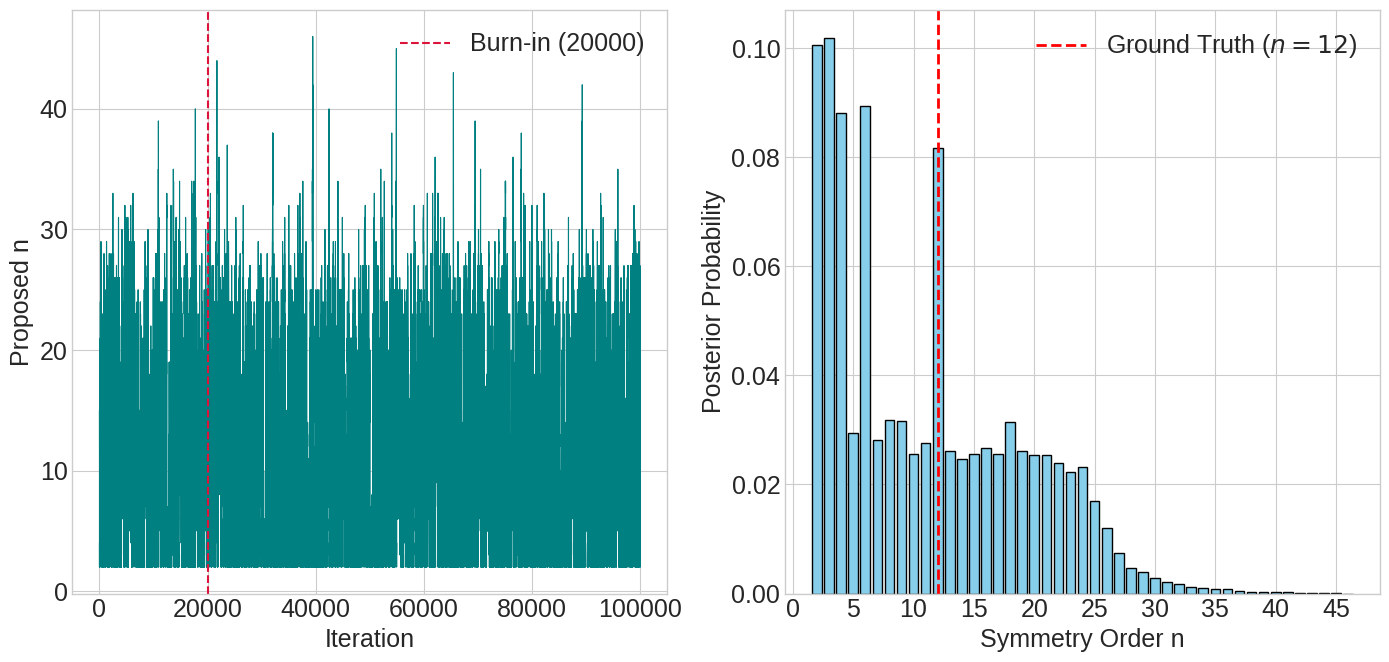}
    \caption{Bayesian inference results for the $D_{12}$ dataset. \textbf{(Left)} The MCMC chain trace demonstrates excellent mixing due to the jump proposal mechanism. \textbf{(Right)} The posterior distribution correctly identifies the family of candidate symmetries, assigning probability mass to the true group ($n=12$) and its subgroups.}
    \label{fig:d12_mcmc}
\end{figure}
\FloatBarrier
\subsection{An Application to Real-World 
Biomechanical Time Series Data}
To demonstrate the framework's utility on real-world data, we analyze a public dataset of human gait dynamics from the UCI Machine Learning Repository \cite{helwig2016smoothing}. The dataset consists of joint-angle time series from ten healthy subjects walking under three conditions: normal (unbraced), with a restrictive brace on the right knee, and with a restrictive brace on the right ankle. For each condition, ten consecutive gait cycles were recorded for both legs across the ankle, knee, and hip joints, with each cycle normalized to $101$ time points. The central question is whether our equation-free framework can detect and quantify changes in the underlying dynamic symmetries of the locomotor system induced by these mechanical constraints.

\vspace{0.5em}
Our method operates on geometric invariances of $2$D point clouds, whereas the gait data consists of $1$D time series. To bridge this gap, we embed each time series into the complex plane using the \textbf{Hilbert transform}. For a signal $A(t)$, the Hilbert transform produces the analytic signal
\[
Z(t) := A(t) + i\,H(A(t)),
\]
where $H(A(t))$ is the quadrature component phase-shifted by $90$ degrees. From $Z(t)$, we extract the \textbf{instantaneous phase} $\varphi(t) := \arg(Z(t))$. Mapping $t \mapsto e^{i\varphi(t)}$ projects the temporal pattern onto the unit circle, converting the $1$D signal into a $2$D geometric trajectory whose symmetries reflect underlying periodic structures of the original time series.
\subsubsection{Bayesian Inference Results}

Following exploratory analysis, we selected a subset of trials that best distinguished the three conditions: the averaged ankle–knee dynamics of the right leg for subjects $4,7,8,9,10$ (Figure~\ref{fig:gait_embedding}). For each condition, we averaged these trials to obtain a low-noise representative gait cycle.
\begin{figure}[htbp]
    \centering
\includegraphics[width=\textwidth]{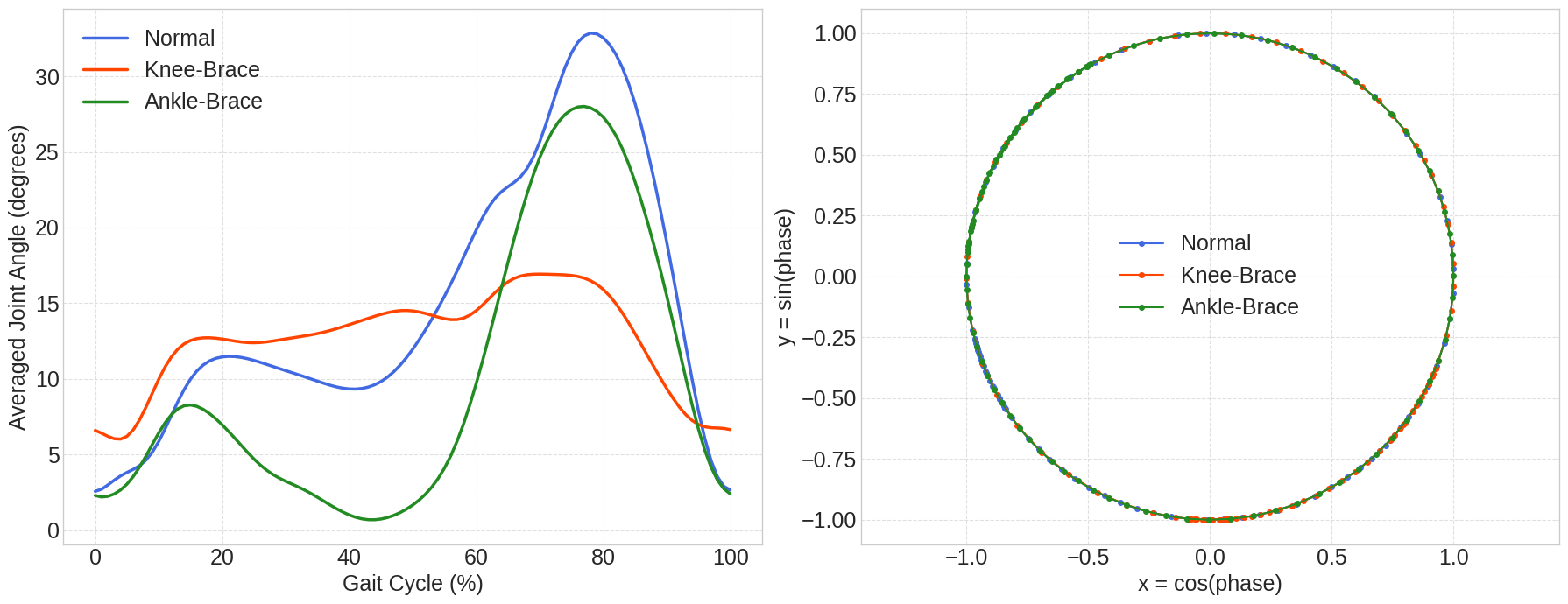}
    \caption{\textbf{(Left)} Representative time series for each condition, obtained by averaging ankle–knee dynamics over selected trials. \textbf{(Right)} Corresponding $2$D embeddings via Hilbert-phase mapping, used for symmetry analysis.}
\label{fig:gait_embedding}
\end{figure}
\vs
The resulting posterior landscapes were sharply peaked, making standard MCMC inefficient. We therefore employed \textbf{Metropolis-Coupled MCMC (MC\textsuperscript{3})}, or Parallel Tempering, which runs multiple chains at different ``temperatures'' (flattened posteriors) and allows swaps between chains. This enables the cold chain to escape local maxima and explore the full parameter space. We further incorporated a hybrid proposal mechanism combining local moves ($n \mapsto n \pm 1$) with occasional global jumps.

Each condition was analyzed with a five-chain geometric temperature ladder for $50,000$ iterations, using the uniform sharpness parameter $\lambda = 200$. Results reveal a clear, hierarchical shift in dynamic symmetry under mechanical constraints.

\subsubsection{Normal Walking}
For the unbraced condition (Figure~\ref{fig:gait_normal}), the posterior is sharply peaked at $n = 2$, with MAP estimate $D_2$ and posterior probability $75.5\%$. This indicates a strong bilateral symmetry, consistent with the fundamental alternation of limb motion during gait. The sampler exhibited excellent mixing (local acceptance rate $0.242$).

\begin{figure}[htbp]
    \centering
\includegraphics[width=\textwidth]{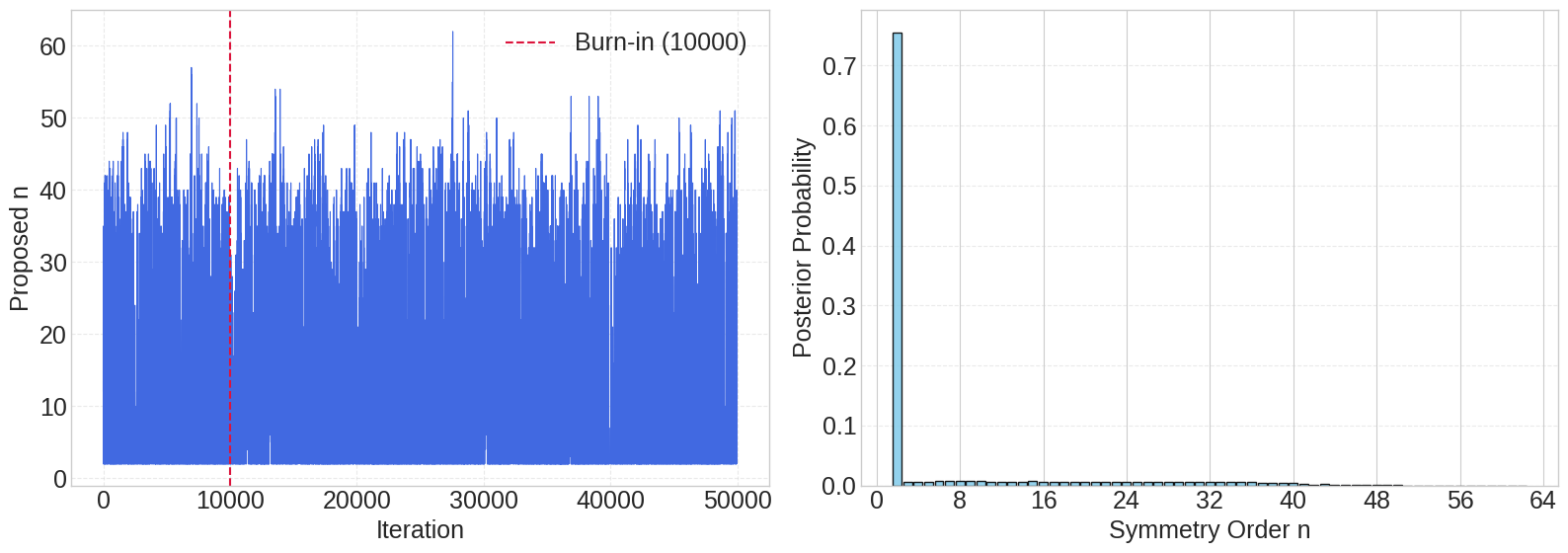}
    \caption{Posterior for the \textbf{Normal} condition. A dominant peak at $n = 2$ indicates bilateral ($D_2$) symmetry.}
    \label{fig:gait_normal}
\end{figure}

\subsubsection{Knee-Brace Condition}
When the knee is braced (Figure~\ref{fig:gait_knee}), the underlying symmetry fundamentally changes. The posterior distribution shows a clear peak at $\mathbf{n=3}$ with a MAP confidence of \textbf{$10.9\%$}. This demonstrates that the mechanical constraint of the knee brace forces the coordinative dynamics of the ankle-knee subsystem to reorganize from a simple bilateral pattern into a more complex, tri-modal rhythmic structure. The sampler remains well-behaved, with a local acceptance rate of $0.900$.

\begin{figure}[htbp]
    \centering
\includegraphics[width=\textwidth]{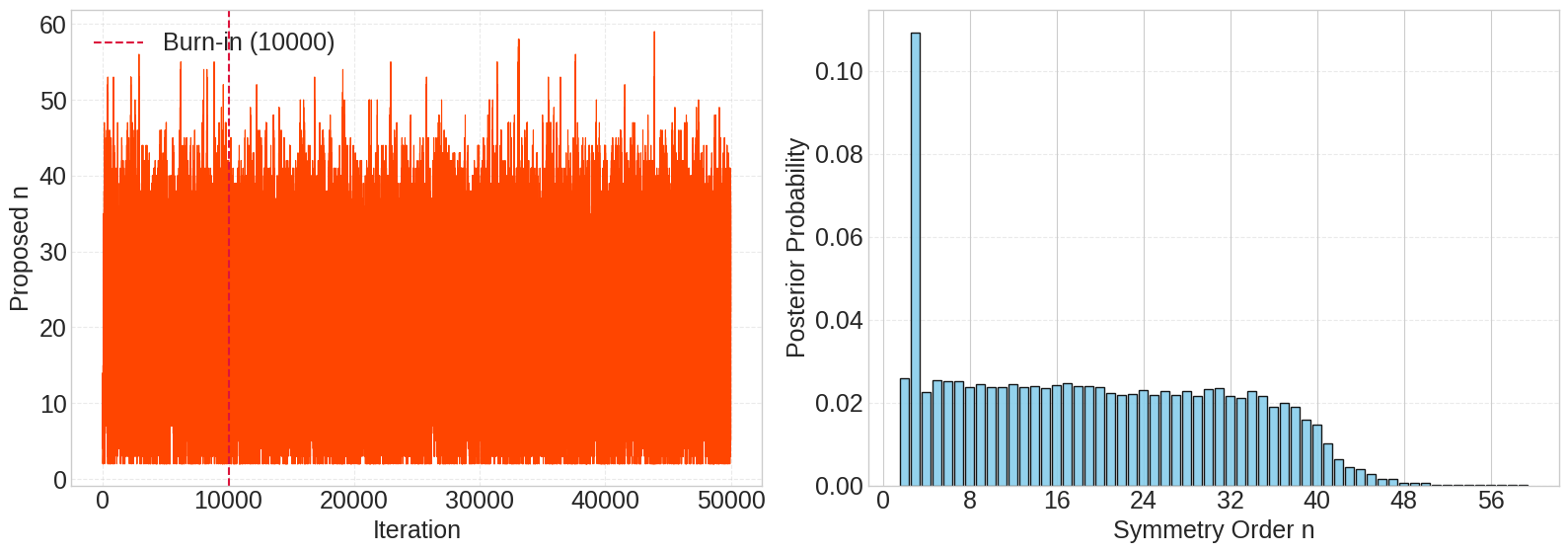}
    \caption{Posterior for the \textbf{Knee-Brace} condition. The MAP estimate shifts to $n = 3$, indicating a $D_3$ symmetry pattern.}
    \label{fig:gait_knee}
\end{figure}

\subsubsection{Ankle-Brace Condition}
Constraining the ankle, a joint that is critical for propulsion, induces an even greater shift (Figure~\ref{fig:gait_ankle}). The posterior is dominated by $n = 4$ (MAP probability $3.8\%$), with a secondary peak at $n = 3$, reflecting residual tri-modal tendencies (local acceptance $ 0.972$). This demonstrates the framework's ability to capture both dominant and competing symmetry structures.

\begin{figure}[htbp]
    \centering
\includegraphics[width=\textwidth]{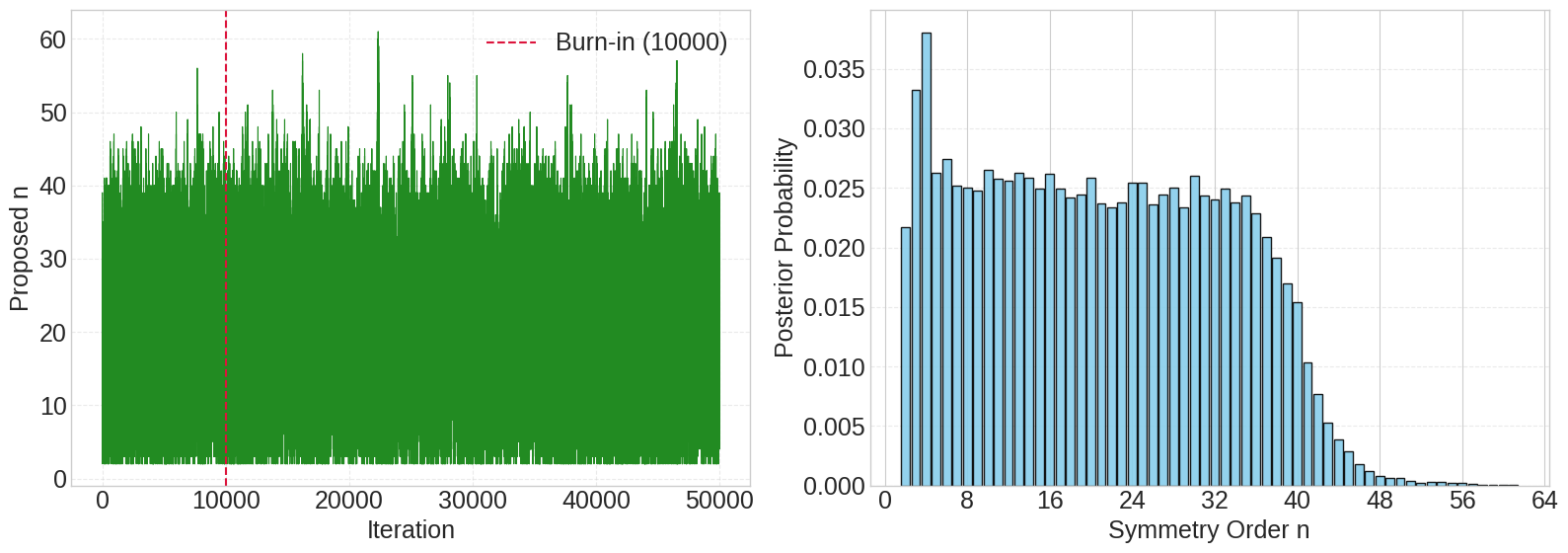}
    \caption{Posterior for the \textbf{Ankle-Brace} condition. The MAP estimate is $n = 4$ ($D_4$ symmetry), with a secondary peak at $n = 3$ indicating uncertainty.}
    \label{fig:gait_ankle}
\end{figure}

\appendix
\section{A Primer on Transformation Groups}\label{sec:transformation_groups}
In this section, we collect some notions from the theory of transformation groups (for a more thorough treatment, we refer the reader to \cite{Kawakubo}). Let $G$ be a group. A set $H \subseteq G$ is said to be a \textbf{subgroup} of $G$, denoted $H \leq G$, if for every $a,b \in H$ one has $a * b^{-1} \in H$, where `$*$' indicates the group operation.
A set $X$ is said to be a \textbf{$G$-set} if there exists
a \textbf{$G$-action} on $X$, i.e. a continuous map $ \rho :G \times X \to X$, satisfying the properties
\begin{enumerate}
    \item $\rho(e,x) \to x$ for all $x \in X$, where $e$ is the identity element in $G$;
    \item $\rho(g_2\rho(g_1,x)) \to \rho(g_2 g_1, x)$ for all $g_1,g_2 \in G$ and $x \in X$.
\end{enumerate}
For notational simplicity, we adopt the convention
\[
gx : = \rho (g,x) \text{ for all } g \in G \text{ and } x \in X. 
\]
A topological space $X$ equipped with a $G$-action is also called a \textbf{$G$-space}. 
Supposing that a $G$-space $X$ admits a metric $d:X \times X \to [0,\infty)$, the action of $G$ on $X$ is said to be \textbf{isometric} (or \textbf{orthogonal} in the case that $X$ is Euclidean) if
\[
d(gx,gy) = d(x,y) \text{ for all } x,y \in X \text{ and } g \in G.
\]
For each $x \in X$, the set $G(x) := \{ gx : g \in G \}$ is called the \textbf{orbit} of $x$ in $X$ and the subgroup $G_x := \{ g \in G : gx = x \}$ is called the \textbf{isotropy} of $x$ in $G$. A subset $D \subset X$ is said to be a \textbf{fundamental domain} for the $G$-action if the following two conditions are satisfied: 
\begin{enumerate}
    \item \textbf{Coverage:} for every $x \in X$, there exist $g \in G$ and $d \in D$ such that $x = gd$;
    \item \textbf{Minimality:} for each $d \in D$, one has $gd = d$ if and only if $g = e$. 
\end{enumerate}
\begin{remark}
Notice that any subset of the general linear group $GL(n,\br)$ acts on the Euclidean space $\br^d$ and that any subgroup of the orthogonal group $O(n)$ acts on $\br^d$ isometrically.  
\end{remark}

\subsection{The Dihedral Group and Its Action on the Complex Plane}\label{app:dihedral}
The dihedral group $D_n$ is the symmetry group of a regular $n$-gon. It has order $2n$ admits the presentation
\[
D_n = \langle r, s \;\mid\; r^n = e,\; s^2 = e,\; srs = r^{-1} \rangle,
\]
where $r$ represents a rotation by angle $2\pi/n$ and $s$ represents a reflection across an axis of symmetry. Every element of $D_n$ can be written uniquely in the form $r^k$ or $r^k s$, with $k \in \{0,1,\dots,n-1\}$.

\subsubsection{Matrix and complex models}
Identifying $\bc$ with $\br^2$, one can view $D_n$ as a subgroup of the orthogonal group $O(2)$. In matrix form, the 
rotation generator $r \in D_n$ acts on vectors in $\br^2$ via
\[
R := \begin{pmatrix}
\cos(2\pi/n) & -\sin(2\pi/n)\\[2pt]
\sin(2\pi/n) & \phantom{-}\cos(2\pi/n)
\end{pmatrix},
\qquad
R^k = \begin{pmatrix}
\cos(2\pi k/n) & -\sin(2\pi k/n)\\[2pt]
\sin(2\pi k/n) & \phantom{-}\cos(2\pi k/n)
\end{pmatrix},
\]
while the reflection $s \in D_n$ can be taken, for instance, as reflection across the $x$-axis,
\[
S := \begin{pmatrix}
1 & 0\\
0 & -1
\end{pmatrix}.
\]
These matrices satisfy $SRS = R^{-1}$, as required by the defining relations of $D_n$. 
\vs
For computations on the complex plane, it is often more convenient to use the complex representation. Let $\zeta_n := e^{2\pi i/n}$ denote a primitive $n$-th root of unity. The action of $D_n$ on $z \in \bc$ is given by
\begin{align*}
    \begin{cases}
        r_k z = \zeta_n^{\,k} z, \quad &k = 0,1,\dots,n-1; \\
        s_k z = \zeta_n^{\,k} \,\overline{z}, \quad &k = 0,1,\dots,n-1.
    \end{cases}
\end{align*}
The assignments $r \mapsto (z \mapsto \zeta_n z)$ and $s \mapsto (z \mapsto \overline{z})$ extends to a faithful homomorphism $D_n \to \mathrm{Isom}(\bc)$.

\subsubsection{Orbits, Isotropies, and Fundamental Domains}
For $z \in \bc$, denote the orbit and isotropy by
\[
G(z) := \{ g z : g \in D_n \}, 
\qquad 
G_z := \{ g \in D_n : g z = z \}.
\]
The structure of these sets depends on the position of $z$. Indeed, if $z \neq 0$ does not lie on any reflection axis, then $G(z)$ is trivial and $|G_z| = 2n$. If $z \neq 0$ lies on a reflection axis, then $G_z$ is generated by the corresponding reflection (i.e. $|G_z| = 2$) and $|G(z)| = n$. At the origin $z=0$, the isotropy is all of $D_n$ and the orbit is a singleton $\{0\} \subset \bc$.

A convenient fundamental domain for the action of $D_n$ on $\bc\setminus\{0\}$ is the closed sector
\[
\mathcal{F} := \bigl\{\, r e^{i\theta} : r \ge 0,\; \theta \in [\,0,\pi/n\,) \,\bigr\},
\]
with boundary rays identified by reflections. Every $z \in \bc\setminus\{0\}$ is $D_n$-conjugate to a unique point of $\mathcal{F}$ up to boundary identifications. 

\subsubsection{Orthogonality and metric invariance}
Since $D_n \leq O(2)$, its action is by Euclidean isometries. For all $g \in D_n$ and $x,y \in \bc$,
\[
|g x - g y| = |x - y|.
\]
Consequently, for empirical measures $\mu_X = \tfrac{1}{N}\sum_{i=1}^N \delta_{x_i}$ and $\mu_Y = \tfrac{1}{N}\sum_{j=1}^N \delta_{y_j}$, the $p$-Wasserstein distance is invariant under the action of $D_n$:
\[
W_p\bigl(gX,\, gY\bigr) = W_p(X,Y), \qquad g \in D_n.
\]
This invariance underpins the conjugation-invariance properties used in Section~\ref{sec:bayesian_framework} and justifies computing group-orbit cost using Euclidean metrics.

\subsubsection{$D_n$ Equivariance of maps}
A map $f:\bc\to\bc$ is $D_n$-equivariant if $f(g z) = g f(z)$ for all $g \in D_n$ and $z \in \bc$. In the complex model above, this requires
\[
f(\zeta_n^{\,k} z) = \zeta_n^{\,k} f(z), 
\qquad 
f(\zeta_n^{\,k} \overline{z}) = \zeta_n^{\,k}\, \overline{f(z)}.
\]
These constraints determine the admissible terms in normal forms for $D_n$-equivariant dynamics and connect directly to the symmetry tests employed in Section~\ref{sec:validation}.

% \begin{thebibliography}{99}
\bibliographystyle{plain}
\bibliography{reference}

‌
% \bibitem{Mackay2003} D.J.C. MacKay, \emph{Information Theory, Inference, and Learning Algorithms}. Cambridge University Press, 2003.

% \bibitem{helwig2016smoothing} N. Helwig \& E. Hsiao-Wecksler,  Multivariate Gait Data [Dataset], UCI Machine Learning Repository (2016). https://doi.org/10.24432/C5861T.

% \bibitem{Melbourne1993} I. Melbourne, M. Dellnitz, \&  M. Golubitsky, \emph{The structure of symmetric attractors}, Archive for Rational Mechanics and Analysis, 123(1) (1993), 75–98. https://doi.org/10.1007/bf00386369

% \bibitem{Olver} P.J. Olver, \emph{Applications of Lie Groups to Differential Equations}, Springer-Verlag, (1993).

% \bibitem{Peyre} G. Peyré, M. Cuturi, \emph{Computational Optimal Transport: With Applications to Data Science}, Foundations and Trends in Machine Learning, 11(5-6) (2019), 355–607. https://doi.org/10.1561/2200000073

% \end{thebibliography} 

\end{document}